\newtheorem{theorem}{Theorem}
\newtheorem{corollary}[theorem]{Corollary}
\newtheorem{proposition}[theorem]{Proposition}
\newcommand{\Mid}{G}
\newcommand{\Side}{S}
\newcommand{\diag}{\mathrm{diag}}
\def\BibTeX{{\rm B\kern-.05em{\sc i\kern-.025em b}\kern-.08em
    T\kern-.1667em\lower.7ex\hbox{E}\kern-.125emX}}
\title{\LARGE Understanding Incremental Learning with Closed-form Solution to\\ Gradient Flow on Overparamerterized Matrix Factorization}
\author{Hancheng Min and Ren\'e Vidal
\thanks{This work was done when H. Min was a postdoc at the University of Pennsylvania. The authors acknowledge the support of the NSF under grants 2031985 and 2212457, the Simons Foundation under grant 814201, and the ONR MURI Program under grant 503405-78051. H. Min thanks  Arthur C. B. de Oliveira for the insightful discussion during the early stage of this work.} 
\thanks{H. Min is with the Institute of Natural Sciences (INS) and the School of Mathematical Sciences (SMS) at the Shanghai Jiao Tong University. \texttt{hanchmin@sjtu.edu.cn}. }
\thanks{R. Vidal is with the Department of Electrical and Systems Engineering (ESE) and the Department of Radiology in the  Perelman School of Medicine at the University of Pennsylvania. \texttt{vidalr@upenn.edu}
}}
\begin{document}
\maketitle
\thispagestyle{empty}
\pagestyle{empty}

\begin{abstract}
Many theoretical studies on neural networks attribute their excellent empirical performance to the implicit bias or regularization induced by first-order optimization algorithms when training networks under certain initialization assumptions. One example is the incremental learning phenomenon in gradient flow (GF) on an overparamerterized matrix factorization problem with small initialization: GF learns a target matrix by sequentially learning its singular values in decreasing order of magnitude over time. In this paper, we develop a quantitative understanding of this incremental learning behavior for GF on the symmetric matrix factorization problem, using its closed-form solution obtained by solving a Riccati-like matrix differential equation. We show that incremental learning emerges from some time-scale separation among dynamics corresponding to learning different components in the target matrix. By decreasing the initialization scale, these time-scale separations become more prominent, allowing one to find low-rank approximations of the target matrix. Lastly, we discuss the possible avenues for extending this analysis to asymmetric matrix factorization problems.
\end{abstract}

\section{Introduction}
Why do deep neural networks~\cite{krizhevsky2012imagenet,he2016deep,vaswani2017attention} work so well in practice? At the center of this mystery lies the fact that  practical neural networks are typically \emph{overparameterized}, i.e., 
their number of parameters is several orders of magnitude larger than the number of available training examples. As a consequence, there are infinitely many parameter choices that can perfectly fit the data. However, different parameters have different generalization properties, and only a few choices lead to networks that make correct predictions for new samples.
%having several magnitudes more parameters than the number of available samples for training it, thereby admitting infinite choices for the parameters under which the network can fit these samples perfectly but with varying abilities to generalize, that is, to make correct prediction over new test samples. 
Behind the success of deep learning lies the ability of training algorithms, such as gradient descent, to find those networks that generalize well, which over past years has motivated many theoretical studies on such an \emph{implicit bias}~\cite{vardi2023implicit} of learning algorithms. 

An interesting one is the \emph{simplicity bias} of gradient descent or gradient flow under certain initial conditions, typically random initialization with small variance, through which the learned network model possesses certain notions of low complexity. Depending on the network architecture, such notions range from classic ones, such as low-rankness~\cite{gunasekar2017implicit,Gidel2019,pmlr-v202-jin23a,min2023early} or sparsity~\cite{gunasekar2018convo,moroshko2020implicit,li2021sparse} of the network weights, to complexity measures in some function space \cite{abbe2023transformers}. In many scenarios, the analyses for such simplicity biases often call for theoretical understandings of the \emph{incremental learning}~\cite{saxe2014exact,Gidel2019,pmlr-v202-jin23a} phenomenon, which describes the evolution of the network model during training as sequential learning of components of the underlying target ground truth function (suppose there is one) in some canonical decompositions, without learning any spurious components that overfit to the training data.

One important testbed for understanding incremental learning is the \emph{matrix factorization}~\cite{Burer2005} problem, studied in the context of linear regression~\cite{saxe2014exact,Gidel2019,pmlr-v139-tarmoun21a,mtvm21icml,de2023dynamics}, matrix sensing~\cite{gunasekar2017implicit,arora2019implicit,stoger2021small,pmlr-v202-jin23a}, and recently the low-rank adaptation~\cite{yaras2024compressible,xu2025understanding}. Under random initialization with small variance, training with gradient descent/flow on the factorized model finds a stationary point that corresponds exactly (noiseless) or approximately (noisy) to the underlying ground truth matrix, following a particular dynamic trajectory. During training, the factorized model grows its singular components to fit/learn the ground truth matrix in a sequential manner such that the growth happens for one singular component at a time to fit one component in the ground truth, and the ground truth component with the largest singular values get learned first, then followed by the second largest and so on so forth. As such, the training algorithm finds the ground truth by every time \emph{incrementally} learning one of its components.

A body of theoretical work on incremental learning in matrix factorization is motivated by some of the early works~\cite{saxe2014exact,gunasekar2017implicit} that investigate the implicit bias of deep learning algorithms. One initial line of attempts studies  \emph{closed-form} solutions: If one carefully initializes the singular vectors of the factorized model to align with those of the ground truth, then the training dynamics are decoupled into multiple scalar dynamics, each corresponding to one principal component in the ground truth~\cite{saxe2014exact,Gidel2019,pmlr-v139-tarmoun21a}. In this case, understanding incremental learning reduces to studying the time-scale separation among these scalar dynamics. However, these results are achieved under very restrictive conditions on initialization. 

To analyze this phenomenon under general initial conditions, recent work focuses on a more fine-grained \emph{trajectory analysis}\cite{stoger2021small,pmlr-v202-jin23a} that introduces some orthogonal decomposition of the parameter space into signal and error subspaces based on the ground truth. This analysis shows that, at certain time epoch, the components within the signal subspace learn some principal components of the ground truth, together with upper bounds on the growth within the error subspace. While the trajectory analysis produces the most general results~\cite{pmlr-v202-jin23a} for matrix factorization, it does not fully answer many quantitative questions. 
For example, how small does the variance of the random initialization need to be in order for incremental learning to happen?
%For example, under how small the variance of the random initialization, does incremental learning happen? 
If one seeks one of the low-rank approximations of the ground truth, within what time interval does the algorithm need to stop? These questions have become increasingly important alongside the growing interest (for example, \cite{chou2025get}) in utilizing the incremental mechanism to obtain low-complexity models, as opposed to the classic regularized approach. 

In this paper, we extend the closed-form solution-based line of work to gradient flow on matrix factorization under general initial conditions, facilitating a rich quantitative understanding of the incremental phenomenon. Our theorems aim to exactly address several questions previously mentioned by showing the precise dependence of incremental learning on the variance of the random initialization and providing time intervals within which the factorized model serves as a good low-rank approximation of the ground truth. Although our results are primarily for symmetric matrix factorization, we outline the path toward extending them to the asymmetric matrix factorization problems, opening up new opportunities to utilize closed-form solutions to study a broader class of problems. 

The organization of this paper is as follows. In Section \ref{sec_prelim}, we introduce the gradient flow dynamics on the symmetric matrix factorization problem, the primary focus of our theoretical results, and we discuss the closed-form solution of the factorized model to the gradient flow. This forms the basis of our main results in Section \ref{sec_main} on the incremental learning in the symmetric matrix factorization problem. Then, in Section \ref{sec_conclusion}, we conclude by briefly discussing the path toward extending current results to the asymmetric matrix factorization problems.

\emph{Notations}: For an $n\by m$ matrix $A$, we let $\|A\|$ and $\|A\|_F$ denote the spectral and Frobenius norm of $A$, respectively. We write $A\succeq 0$ ($A\preceq 0$) when $A$ is positive semi-definite (negative semi-definite). Then, we let $\diag\{a_i\}_{i=1}^n$ be a diagonal matrix with $a_i$ being its $i$-th diagonal entry.
% We define $\one_A$ as the indicator for a statement $A$: $\one_A=1$ if $A$ is true and $\one_A=0$ otherwise, and define $[\cdot]_+:=\max\{\cdot,0\}$. We also let $\mathcal{N}(\bmu,\bm{\Sigma}^2)$ denote the normal distribution with mean $\bmu$ and covariance matrix $\bm{\Sigma}^2$, and $\text{Unif}(S)$ denote the uniform distribution over a set $S$. Lastly, we let $[N]$ denote the integer set $\{1,\cdots,N\}$.

\section{Preliminaries: Gradient Flow on Symmetric Matrix Factorization}\label{sec_prelim}
The problem of our main interest in this paper is the following \emph{symmetric matrix factorization}:
\be
    \min_{U\in \mathbb{R}^{n\times r}} \mathcal{L}(U)=\frac{1}{4}\|Y-UU^\top\|^2_F\,,\label{eq_sym_mf}
\ee
where $Y\in\mathbb{R}^{n\times n}$ with $Y\succeq 0$ is the \emph{target matrix} to be factorized. Although one may find solving \eqref{eq_sym_mf} easy by simply taking the SVD of $Y$, the goal of this paper is rather to understand training trajectories when one seeks to solve \eqref{eq_sym_mf} by gradient flow (GF) on $\mathcal{L}(U)$, i.e.,  
\be
    \dot{U}=-\nabla_U\mathcal{L}(U)=(Y-UU^\top)U,\ U(0)=U_0\,,\label{eq_sym_gf_u}
\ee
from some initial condition $U_0$. More specifically, as stated in the introduction, we are interested in the incremental learning phenomenon when $U(t)U^\top(t)$ learns the singular components of $Y$ sequentially along the GF trajectory. Therefore, it suffices to study the induced dynamic evolution of the factorized model $U(t)U^\top(t):=W(t)$ from \eqref{eq_sym_gf_u}, which is
\begin{align}
    &\dot{W}\!=\!\dot{U}U^\top\!\!+\!U\dot{U}^\top\!=\!WY\!+\!YW\!-\!2W^2, W\!(0)\!=\!U_0U_0^\top\!.\label{eq_sym_mf_w}
\end{align}
\eqref{eq_sym_mf_w} is a special form of matrix Riccati differential equations, thus can be analyzed from its closed-form solution.

\subsection{Closed-form solution}
The closed-form solutions to matrix Riccati differential equations are well-studied, for example, in~\cite{kuvcera1973review,Sasagawa1982finite}. We thus have the following result regarding the solution to \eqref{eq_sym_mf_w}
\begin{proposition}\label{prop_cl_sol}
    If $Y$ has $K$ non-zero singular values and the full SVD of $Y\succeq 0$ is $\Phi\begin{bmatrix}
    \Sigma_Y & 0\\
    0 &0
\end{bmatrix}\Phi^\top$, then 
    the matrix Riccati differential equation 
    \begin{equation}
        \dot{W}=WY+YW-2W^2,\ W(0)=W_0\succeq 0
    \end{equation}
    has unique solution of the following:
    \begin{equation}
        W(t)\!=\!\Phi \Side(t) \tilde{W}_0\left(I_n\!+\!\Mid (t) \tilde{W}_0\right)^{-1}\Side^\top\!(t)\Phi^\top,\label{eq_riccati_sol}
    \end{equation}
    where $\tilde{W}_0=\Phi^\top W_0 \Phi$ and 
    \be
        \Mid (t)\!=\!\!\begin{bmatrix}
            \Sigma_Y^{-1}\!(e^{2\Sigma_Yt}\!-\!I_{K})\!\!&0\\
            0& \!\!2I_{n\!-\!K}t
        \end{bmatrix}\!, \Side(t)\!=\!\!\begin{bmatrix}
            e^{\Sigma_Yt}\!\! & 0 \\
            0& \!\!I_{n\!-\!K}
        \end{bmatrix}\!.\label{eq_aux_mat}
    \ee
\end{proposition}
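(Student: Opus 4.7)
The plan is to exploit the orthogonal invariance of the equation to reduce to the case of diagonal $Y$, then linearize the resulting matrix Riccati equation via a factorization ansatz, and finally verify that the closed-form candidate is globally well-defined.

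Setting $\tilde{W}(t) = \Phi^\top W(t) \Phi$, the orthogonal change of basis turns \eqref{eq_sym_mf_w} into $\dot{\tilde{W}} = \tilde{W} \Lambda + \Lambda \tilde{W} - 2 \tilde{W}^2$ with $\Lambda := \Phi^\top Y \Phi = \diag(\Sigma_Y, 0)$ and $\tilde{W}(0) = \tilde{W}_0$. So it suffices to solve the equation for diagonal $Y = \Lambda$ and rotate back by $\Phi$ at the end.

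For the linearization, I would write $\tilde{W} = A B^{-1}$ and seek a splitting of the Riccati equation making $(A,B)$ satisfy a linear system. Substituting $\dot{\tilde{W}} = \dot{A} B^{-1} - \tilde{W} \dot{B} B^{-1}$ and matching terms shows that the choice
\begin{equation*}
\dot{A} = \Lambda A, \qquad \dot{B} = 2A - \Lambda B, \qquad A(0) = \tilde{W}_0,\; B(0) = I_n
\end{equation*}
reproduces the Riccati equation. This linear system solves explicitly: $A(t) = e^{\Lambda t} \tilde{W}_0 = \Side(t) \tilde{W}_0$, and the integrating factor $e^{\Lambda t}$ applied to the $B$-equation gives $e^{\Lambda t} B(t) = I_n + \bigl(\int_0^t 2 e^{2\Lambda s}\, ds\bigr) \tilde{W}_0$. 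A blockwise evaluation of the integral (splitting the $\Sigma_Y$-block from the kernel of $Y$, where $e^{2\Lambda s}$ degenerates to the identity and integration contributes the $2 I_{n-K} t$ block) reproduces exactly the $\Mid(t)$ in \eqref{eq_aux_mat}, so $B(t) = \Side(t)^{-1}(I_n + \Mid(t) \tilde{W}_0)$ and
\begin{equation*}
\tilde{W}(t) = A(t) B(t)^{-1} = \Side(t) \tilde{W}_0 (I_n + \Mid(t) \tilde{W}_0)^{-1} \Side(t)^\top,
\end{equation*}
which is \eqref{eq_riccati_sol} after conjugating by $\Phi$ (noting $\Side(t)^\top = \Side(t)$ since $\Side$ is diagonal).

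The main step that still needs justification is global invertibility of $I_n + \Mid(t) \tilde{W}_0$, without which the candidate formula is not even defined. Both $\Mid(t) \succeq 0$ (from its block form, as each diagonal entry $(e^{2\sigma_i t}-1)/\sigma_i \ge 0$ and $2t \ge 0$) and $\tilde{W}_0 \succeq 0$ (since $W_0 = U_0 U_0^\top$ and $\Phi$ is orthogonal). Hence $\Mid(t) \tilde{W}_0$ is similar to the PSD matrix $\Mid(t)^{1/2} \tilde{W}_0 \Mid(t)^{1/2}$, so all of its eigenvalues are non-negative and every eigenvalue of $I_n + \Mid(t) \tilde{W}_0$ is at least one. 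Uniqueness then follows from Picard-Lindelöf applied to \eqref{eq_sym_mf_w}, whose right-hand side is polynomial (hence locally Lipschitz) in $W$. The main obstacle I expect is not conceptual but bookkeeping: cleanly handling the block structure of $\Side$ and $\Mid$ so that the rank-deficient case $\rk(Y) = K < n$ is absorbed correctly into the formula, since the $2 I_{n-K} t$ block is really the continuous $\lambda \to 0$ limit of $(e^{2\lambda t}-1)/\lambda$.
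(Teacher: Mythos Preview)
Your proposal is correct and follows essentially the same approach as the paper: both linearize the Riccati equation via the factorization $P = X_2 X_1^{-1}$ (your $AB^{-1}$) with $(X_1,X_2)$ solving a linear system, and both establish global invertibility of $I_n + \Mid(t)\tilde{W}_0$ from the positive semidefiniteness of $\Mid(t)$ and $\tilde{W}_0$. The only cosmetic difference is that the paper imports this linearization as a classical result from the matrix Riccati literature, whereas you derive it directly from the ansatz.
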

\begin{proof}
    From~\cite{kuvcera1973review,Sasagawa1982finite}, one knows that the solution to general matrix Riccati equation
    \begin{equation}
        \dot{P}=AP+PA^\top-PRP+Q,\ P(0)=P_0\,,
    \end{equation}
    is given by $P(t)=X_2(t)X_1^{-1}(t)$, where $\{X_1(t),X_2(t)\}$ are solution to an LTI system
    \begin{equation}
        \begin{bmatrix}
            \dot{X}_1\\
            \dot{X}_2
        \end{bmatrix}\!=\!\begin{bmatrix}
            -A^\top&R\\
            Q& A
        \end{bmatrix}\!\begin{bmatrix}
            X_1\\
            X_2
        \end{bmatrix},\ X_1(0)=I_n,X_2(0)=P_0\,,\label{eq_aug_lti}
    \end{equation}
    To apply this, let $P\!=\!\Phi^\top\! W\! \Phi$, $A\!=\!\Sigma_Y$, $Q\!=\!0$ and $R\!=\!I_n$, by solving \eqref{eq_aug_lti} we have (an equivalent expression for \eqref{eq_riccati_sol})
    \be
        \underbrace{\Phi^\top\! W(t) \Phi}_{P(t)}\!=\!\underbrace{\Side(t) \Phi^\top\! W_0\Phi}_{X_2(t)}\! \cdot\! \bigg(\underbrace{\!\Side^{-1\!}(t)\!\!\lp I_n\!+\!\Mid(t) \Phi^\top W_0\Phi\rp}_{X_1(t)}\!\!\bigg)^{\!\!\!\!-1}\!\!\!\!,\!\!\label{eq_riccati_sol_temp}
    \ee
    where $\Mid(t),\Side(t)$ are defined in \eqref{eq_aux_mat}. Notice that $\forall t$, all eigenvalues of $G(t)W_0$ are real and non-negative since both $G(t)$ and $W_0$ are positive semi-definite, then $X_1(t)$ is invertible $\forall t$, ensuring \eqref{eq_riccati_sol_temp} is the unique solution~\cite[Corollary 1]{Sasagawa1982finite}.
\end{proof}
\subsection{Additional settings}
With the closed-form solution available, one can study this GF under any initial condition $W_0=U_0U_0^\top\succeq 0$. However, different types of initialization (for example, random v.s. deterministic initialization) may require different analyses. Due to space constraints, we opt to discuss the case of \emph{overparametrized} matrix factorization under deterministic initialization with varying scales.
\begin{figure*}[ht!]
    \centering
    \includegraphics[width=0.98\linewidth]{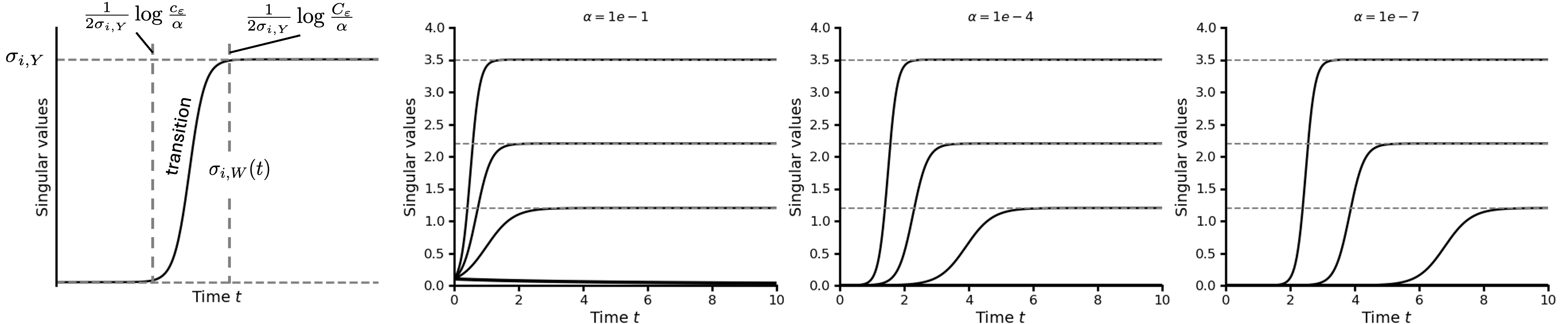}
    \caption{Incremental learning in symmetric matrix factorization. Under small spectral initialization, (left plot) the dynamic evolution of $\sigma_{i,W}(t)$ for learning singular component $\sigma_{i,Y}$ in target matrix $Y$ has a sharp transition phase where its value grows rapidly from approximately zero to one that is close to the target $\sigma_{i,Y}$. (Right three plots) As the initialization scale $\alpha$ decreases, the transition phases for different target values get further separated, and low-rank approximations of $Y$ emerge along the GF trajectory. Similar dynamic evolutions of $W(t)$ still exist under general initialization, as we show in Theorem \ref{thm_gen_inc}.}
    \label{fig:inc}
\end{figure*}

As we described in the introduction, the incremental learning phenomenon generally happens under initialization with a small scale. To isolate this critical effect of the scale/norm of the initialization, it is convenient to write our initial condition as $U(0)=U_0=\alpha^{1/2} \bar{U}_0$ for some \emph{initialization scale} $\alpha>0$ and \emph{initialization shape} $\bar{U}_0\succeq 0$.  Our analysis 
\begin{itemize}[leftmargin=0.35cm]
    \item assumes a fixed shape $\bar{U}_0$ and describes how the scale $\alpha$ affects the incremental learning phenomenon;
    \item In addition, we assume $W_0= \alpha \bar{U}_0\bar{U}_0^\top$ is invertible. This necessarily requires that the problem \eqref{eq_sym_mf} is \emph{overparametrized}~\cite{pmlr-v139-tarmoun21a}, i.e. $r\geq n$, which imply that the optimal value of \eqref{eq_sym_mf} is always zero (target is exactly factorized) regardless of the target $Y$.
\end{itemize}

\section{Main results: Incremental Learning in Symmetric Matrix Factorization}\label{sec_main}
In this section, we discuss incremental learning in symmetric factorization problems. First, we revisit the GF under spectral initialization with a small scale and refine the existing asymptotic results~\cite{Gidel2019} to a non-asymptotic one. Then, we show that a similar quantitative characterization of incremental learning holds for general initialization. These main results follow the problem settings stated in the previous section, and afterward, we discuss how to adapt them to other settings.
\subsection{Incremental learning under spectral initialization}
Give the GF \eqref{eq_sym_gf_u} with the SVD of the target matrix $Y=\Phi \Sigma_Y \Phi^\top$, the spectral initialization of $U$ is any $U_0$ that has an SVD of the form $U_0=\alpha^{1/2}\Phi \Sigma_{U_0} V_{U_0}^\top$. Recall that $\alpha>0$ is the initialization scale. Under a spectral initialization, the induced ode on $W$ \eqref{eq_sym_mf_w} becomes:
\be
    \dot{W}=WY+YW-2W^2,\ W(0)=\alpha\Phi \Sigma_{U_0}^2\Phi^\top\,,
\ee
Through a change of variables $\tilde{W}=\Phi^\top W\Phi$, the dynamics on $\tilde{W}$ is given by
\be
    \dot{\tilde{W}}=\tilde{W}\Sigma_Y+\Sigma_Y\tilde{W}-2\tilde{W}^2,\ \tilde{W}(0)=\alpha\Sigma_{U_0}^2\,,
\ee
whose solution $\tilde{W}(t)$ is diagonal $\forall t$, and the dynamics of diagonal entries are decoupled from each other. Indeed, examining the closed-form solution in Proposition \ref{prop_cl_sol} under a spectral initialization, we have the following:

\begin{corollary}\label{col_spec_sol}
    Suppose $\Sigma_{Y}=\diag\{\sigma_{i,Y}\}_{i=1}^n$. Under some spectral initialization $U(0)=\alpha^{1/2}\Phi\Sigma_{U_0}V_{U_0}^\top$, the solution $U(t)$ to the GF dynamics \eqref{eq_sym_gf_u} induces $W(t)=U(t)U^\top(t)$ of the form $W(t)=\Phi \diag\{\sigma_{i,W}(t)\}_{i=1}^n\Phi^\top$ with
    \begin{align}
        \sigma_{i,W}(t)&\!=\!\frac{\alpha\sigma_{i,Y}\sigma_{i,0}e^{2\sigma_{i,Y}t}}{\sigma_{i,Y}+\alpha\sigma_{i,0}(e^{2\sigma_{i,Y}t}-1)}& \,, \text{ if } \sigma_{i,Y}\neq 0; \label{eq_sv_traj_si}\\
        \sigma_{i,W}(t)&\!=\!\frac{\alpha\sigma_{i,0}}{1+2\alpha\sigma_{i,0}t}&\,,  \text{ if }\sigma_{i,Y}=0\,,\label{eq_sv_traj_si_null}
    \end{align}
    where $\sigma_{i,0}=[\Sigma_{U_0}^2]_{ii}\geq 0,\forall i$.
\end{corollary}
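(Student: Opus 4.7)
The plan is to obtain Corollary \ref{col_spec_sol} by directly specializing Proposition \ref{prop_cl_sol} to the case of spectral initialization, exploiting the fact that $W_0$ and $Y$ are simultaneously diagonalized by $\Phi$, which forces the entire closed-form expression to decouple entrywise.

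First, I would compute $\tilde{W}_0=\Phi^\top W_0\Phi$ for the spectral initialization $U_0=\alpha^{1/2}\Phi\Sigma_{U_0}V_{U_0}^\top$. Since $W_0=U_0U_0^\top=\alpha\Phi\Sigma_{U_0}^2\Phi^\top$, this gives $\tilde{W}_0=\alpha\Sigma_{U_0}^2=\alpha\diag\{\sigma_{i,0}\}_{i=1}^n$, which is diagonal. Next, I would observe from \eqref{eq_aux_mat} that $G(t)$ and $S(t)$ are themselves diagonal, with entries $[S(t)]_{ii}=e^{\sigma_{i,Y}t}$ and $[G(t)]_{ii}=\sigma_{i,Y}^{-1}(e^{2\sigma_{i,Y}t}-1)$ when $\sigma_{i,Y}>0$, and $[S(t)]_{ii}=1$, $[G(t)]_{ii}=2t$ when $\sigma_{i,Y}=0$. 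Because the set of diagonal matrices is closed under products and (whenever defined) inverses, the factor $S(t)\tilde{W}_0(I_n+G(t)\tilde{W}_0)^{-1}S^\top(t)$ in \eqref{eq_riccati_sol} is diagonal, so $W(t)=\Phi\diag\{\sigma_{i,W}(t)\}_{i=1}^n\Phi^\top$ with
\[
\sigma_{i,W}(t)=\frac{[S(t)]_{ii}^{2}\,\alpha\sigma_{i,0}}{1+\alpha\sigma_{i,0}\,[G(t)]_{ii}}.
\]

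The final step is just simplification in the two cases. For $\sigma_{i,Y}\neq 0$, substituting the diagonal entries above and multiplying numerator and denominator by $\sigma_{i,Y}$ to clear the $\sigma_{i,Y}^{-1}$ factor yields \eqref{eq_sv_traj_si}; for $\sigma_{i,Y}=0$, direct substitution gives \eqref{eq_sv_traj_si_null}. There is no genuine obstacle in this proof: the argument is a routine specialization, and the only point worth verifying carefully is that the matrix inverse in \eqref{eq_riccati_sol} reduces to entrywise reciprocals, which holds precisely because $\tilde{W}_0$, $G(t)$, and $S(t)$ share a common eigenbasis after the rotation by $\Phi^\top$, so no cross terms between different indices $i$ can arise.
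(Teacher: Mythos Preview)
Your proposal is correct and matches the paper's approach: the paper does not give an explicit proof but presents the corollary as the immediate specialization of Proposition~\ref{prop_cl_sol} to spectral initialization, exactly as you outline. The diagonalization of $\tilde{W}_0$, $G(t)$, and $S(t)$ reduces \eqref{eq_riccati_sol} to the scalar formulas \eqref{eq_sv_traj_si}--\eqref{eq_sv_traj_si_null}, and your entrywise computation is accurate.
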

Spectral initialization offers a simplified analysis of the GF trajectory. Under the induced dynamics \eqref{eq_sym_mf_w}, we expect the factorized model $W(t)$ to converge to the target matrix $Y$. Indeed, for each $i$, \eqref{eq_sv_traj_si} is the learning trajectory for a non-zero singular mode $\sigma_{i,Y}$ of $Y$, and it is easy to see that as long as $\sigma_{i,0}\neq 0$, we have $\lim_{t\ra \infty} \sigma_{i,W}(t)=\sigma_{i,Y}$; \eqref{eq_sv_traj_si_null} is the learning trajectory for a zero singular mode $\sigma_{i,Y}$ of $Y$, and we always have $\lim_{t\ra \infty} \sigma_{i,W}(t)=0$. Therefore, if $\sigma_{i,0}\neq 0, \forall i$ with $\sigma_{i,Y}\neq 0$, one has $\lim_{t\ra \infty} W(t)=Y$, thereby learning the target matrix. Note that a similar condition is also necessary in the general case. That is, learning the exact $Y$ requires $\Phi_Y^\top W(0)\Phi_Y$ to have full rank, where columns of $\Phi_Y$ form an orthonormal basis of the range space of $Y$. 

As shown in the previous discussion, the asymptotic behavior of these scalar dynamics \eqref{eq_sv_traj_si}\eqref{eq_sv_traj_si_null} are easy to study, concerning whether $W(t)$ can successfully learn the target matrix $Y$. Now, we turn to the questions regarding the \emph{transient behavior} under small initialization with scale $\alpha\ll 1$. Notice that when $0<\alpha\sigma_{i,0}<\sigma_{i,Y}$, $\sigma_{i,W}(t)$ is strictly monotonically increasing w.r.t. time $t$. Our following discussions assume $\alpha$ is sufficiently small so that $\sigma_{i,W}(t)$ is monotonically increasing. 

For every $\sigma_{i,W}(t)$ following \eqref{eq_sv_traj_si} that learns a non-zero mode $\sigma_{i,Y}$ of $Y$, we have $\sigma_{i,W}(0)=\alpha\sigma_{i,0}$, which is close to zero for sufficiently small $\alpha$, and we know asymptotically $\lim_{t\ra \infty} \sigma_{i,W}(t)=\sigma_{i,Y}$. Therefore, each $\sigma_{i,W}(t)$ must learn its target $\sigma_{i,Y}$ through some transitional phase during which the value $\sigma_{i,W}(t)$ grow from a small value close to $\alpha\sigma_{i,0}$ to a value close to $\sigma_{i,Y}$. If there is a \emph{time-scale separation} between these transitional phases for different $i$, then we expect that at some intermediate time along GF, a subset of $\sigma_{i, W}(t)$ has learned their corresponding target $\sigma_{i,Y}$ while the rest stays close to zero, from which the resulting $W(t)$ serves as some low-rank approximation of $Y$.

The above discussion suggests a closer look at the transient behavior in the solution \eqref{eq_sv_traj_si} when the initialization scale is small: For some $c,C>0$ such that $c\sigma_{i,0}\ll \sigma_{i,Y}$ and $C\sigma_{i,0}\gg \sigma_{i,Y}$, $\forall i$. Then we see that for every $i$ (for any $\alpha\leq c$),
\begin{align}
    \sigma_{i,W}\left(\frac{1}{2\sigma_{i,Y}}\log\frac{c}{\alpha}\right)&=\frac{c\sigma_{i,Y}\sigma_{i,0}}{\sigma_{i,Y}+(c-\alpha)\sigma_{i,0}}\simeq c\sigma_{i,0},\label{eq_sv_traj_si_eval_s}\\
    \sigma_{i,W}\left(\frac{1}{2\sigma_{i,Y}}\log\frac{C}{\alpha}\right)&=\frac{C\sigma_{i,Y}\sigma_{i,0}}{\sigma_{i,Y}+(C-\alpha)\sigma_{i,0}}\simeq \sigma_{i,Y}\,,\label{eq_sv_traj_si_eval_b}
\end{align}
suggesting a sharp transition in $\sigma_{i,\W}(t)$ from a small value $c\sigma_{i,0}$ to one close to its target value $\sigma_{i,Y}$ at a time that scales as $\Theta\left(\frac{1}{\sigma_{i,Y}}\log\frac{1}{\alpha}\right)$. Notably, the transition time depends inverse-proportionally on $\sigma_{i,Y}$ (See Figure \ref{fig:inc}). Thus, this time-scale separation can result in singular values of $Y$ being learned one by one, with larger ones learned first and smaller ones later, which is exactly an incremental learning phenomenon. Based on these discussions, the conditions to achieve this are:
\begin{itemize}[leftmargin=0.35cm]
    \item Singular values $\sigma_{i,Y}$ of $Y$ are distinct;
    \item The scale $\alpha$ is sufficiently small such that the transitional phases $\lhp \frac{1}{2\sigma_{i,Y}}\log\frac{c}{\alpha}, \frac{1}{2\sigma_{i,Y}}\log\frac{C}{\alpha} \rhp$ are non-overlapping.
\end{itemize}
Formally, we have the following theorem:
\begin{theorem}[Incremental learning under small spectral initialization]\label{thm_spec_inc}
    Suppose the $K$ non-zero singular values $\sigma_{1,Y},\sigma_{2,Y},\cdots,\sigma_{K,Y}$ of $Y$ are distinct and ordered in decreasing order. Let the spectral initialization $U(0)=\alpha^{1/2}\Phi\Sigma_{U_0}V_{U_0}^\top$ have a initialization shape $\Sigma_{U_0}^2=\diag\{\sigma_{i,0}\}_{i=1}^d$ with $\sigma_{i,0}>0,\forall 1\leq i\leq K$. Given any error tolerance $0<\varepsilon\leq \sigma_{K,Y}$, let $c_\varepsilon=\frac{\varepsilon}{\max_{i}\sigma_{i,0}}$, and $C_\varepsilon=\frac{\sigma_{1,Y}^2}{\varepsilon \min_{1\leq i\leq K}\sigma_{i,0}}$, Suppose $\alpha$ is sufficiently small such that $\alpha\leq c_\varepsilon$ and that $\frac{-\log\alpha+\log c_\varepsilon}{-\log\alpha+\log C_\varepsilon}> \max_{1\leq k\leq K-1}\frac{\sigma_{k+1}}{\sigma_k}$, then the time intervals
    \be
        \mathcal{I}_{k}:=\left[\frac{1}{2\sigma_{k,Y}}\log\frac{C_\varepsilon}{\alpha},\frac{1}{2\sigma_{k+1,Y}}\log\frac{c_\varepsilon}{\alpha}\right],\ 1\leq k \leq K\label{eq_time_int}
    \ee
    are all non-empty (we have let $\frac{1}{0}:=\infty$), and the $W(t)$ in Corollary \ref{col_spec_sol} satisfies that $\forall 1\leq k\leq K$,
    \begin{equation}
        \|W(t)-\hat{Y}_k\|\leq \varepsilon,\quad \forall t\in \mathcal{I}_{k}, \label{eq_inc_learning_spec}
    \end{equation}
    % there exists sufficiently small $\alpha$, and a corresponding sequence of time epochs $0<t_1<t_2<\cdots<t_{K-1}<t_K$ such that the $\W(t)$ in Proposition \ref{prop_sol} satisfies
    % \begin{equation}
    %     \|\W(t_k)-\Y_k\|\leq \varepsilon,\quad \forall 1\leq k\leq K-1;\qquad \|\W(t)-\Y\|\leq \varepsilon, \label{eq_inc_learning}
    % \end{equation}
    where $\hat{Y}_k:=\arg\min_{\mathrm{rank}(Z)=k}\|Y-Z\|_F$ is the best rank-$k$ approximation of $Y$.
\end{theorem}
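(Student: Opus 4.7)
The plan is to exploit that, under spectral initialization, Corollary \ref{col_spec_sol} makes $W(t)$ simultaneously diagonalizable with $\hat Y_k$ in the basis $\Phi$, so that
\[
\|W(t)-\hat Y_k\|=\max_{1\le i\le n}|\sigma_{i,W}(t)-\hat\sigma_{i,k}|,
\]
where $\hat\sigma_{i,k}=\sigma_{i,Y}$ for $i\le k$ and $\hat\sigma_{i,k}=0$ otherwise. It therefore suffices to control each scalar trajectory $\sigma_{i,W}(t)$ from \eqref{eq_sv_traj_si}--\eqref{eq_sv_traj_si_null} on $\mathcal{I}_k$, splitting the indices into three regimes: (a) already-learned components $1\le i\le k$, for which I want $\sigma_{i,W}(t)\approx \sigma_{i,Y}$; (b) not-yet-learned nonzero components $k<i\le K$, for which $\sigma_{i,W}(t)$ should still be near zero; and (c) null components $i>K$, which must stay small for all $t$.

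Before attacking the three regimes, I would verify non-emptiness of $\mathcal{I}_k$ by rewriting the hypothesis $\frac{\log(c_\varepsilon/\alpha)}{\log(C_\varepsilon/\alpha)}>\frac{\sigma_{k+1,Y}}{\sigma_{k,Y}}$ as $\frac{1}{2\sigma_{k+1,Y}}\log\frac{c_\varepsilon}{\alpha}>\frac{1}{2\sigma_{k,Y}}\log\frac{C_\varepsilon}{\alpha}$, which is precisely the condition that the right endpoint exceeds the left. I would then observe that because $\alpha\sigma_{i,0}\le c_\varepsilon \max_j\sigma_{j,0}=\varepsilon\le \sigma_{K,Y}\le \sigma_{i,Y}$ for every nonzero mode, \eqref{eq_sv_traj_si} is strictly increasing in $t$ and bounded above by $\sigma_{i,Y}$, while \eqref{eq_sv_traj_si_null} is strictly decreasing. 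These two monotonicity facts reduce the verification in each regime to a single endpoint evaluation.

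For regime (b), $i>k$ gives $\sigma_{i,Y}\le \sigma_{k+1,Y}$, so $t\in\mathcal{I}_k$ implies $t\le \tfrac{1}{2\sigma_{i,Y}}\log\frac{c_\varepsilon}{\alpha}$; substituting $e^{2\sigma_{i,Y}t}=c_\varepsilon/\alpha$ into \eqref{eq_sv_traj_si} and using monotonicity yields
\[
\sigma_{i,W}(t)\le \frac{c_\varepsilon\sigma_{i,Y}\sigma_{i,0}}{\sigma_{i,Y}+(c_\varepsilon-\alpha)\sigma_{i,0}}\le c_\varepsilon\sigma_{i,0}\le \varepsilon.
\]
Regime (c) is immediate from \eqref{eq_sv_traj_si_null}: $\sigma_{i,W}(t)\le \sigma_{i,W}(0)=\alpha\sigma_{i,0}\le \varepsilon$. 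For regime (a), $i\le k$ gives $\sigma_{i,Y}\ge \sigma_{k,Y}$, so $t\ge \tfrac{1}{2\sigma_{i,Y}}\log\frac{C_\varepsilon}{\alpha}$ for $t\in \mathcal{I}_k$; substituting $e^{2\sigma_{i,Y}t}=C_\varepsilon/\alpha$ and using monotonicity yields
\[
0\le \sigma_{i,Y}-\sigma_{i,W}(t)\le \frac{\sigma_{i,Y}(\sigma_{i,Y}-\alpha\sigma_{i,0})}{\sigma_{i,Y}+(C_\varepsilon-\alpha)\sigma_{i,0}}\le \frac{\sigma_{1,Y}^2}{(C_\varepsilon-\alpha)\min_j\sigma_{j,0}},
\]
which by the definition $C_\varepsilon\min_j\sigma_{j,0}=\sigma_{1,Y}^2/\varepsilon$ matches $\varepsilon$ up to the factor $C_\varepsilon/(C_\varepsilon-\alpha)$. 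Combining the three regime bounds via the maximum finishes the proof.

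The main obstacle is this last bookkeeping step in regime (a): the direct endpoint evaluation returns $\varepsilon\cdot \frac{C_\varepsilon}{C_\varepsilon-\alpha}$ rather than $\varepsilon$ on the nose, so I would either sharpen the constant in the definition of $C_\varepsilon$ by a benign factor or exploit that under the smallness hypothesis $\alpha\le c_\varepsilon$ one has $\alpha/C_\varepsilon\lesssim \varepsilon^2/\sigma_{1,Y}^2\ll 1$, making the slack absorbable. Beyond this, the proof is simply scalar monotonicity combined with direct evaluation of \eqref{eq_sv_traj_si}--\eqref{eq_sv_traj_si_null} at the two endpoints of $\mathcal{I}_k$, a clean consequence of the diagonal closed form in Corollary \ref{col_spec_sol}.
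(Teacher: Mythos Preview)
Your proposal is correct and matches the paper's proof essentially line for line: reduce to the diagonal via Corollary~\ref{col_spec_sol}, split indices into the same three regimes, and control each scalar trajectory by monotonicity plus endpoint evaluation. The ``obstacle'' you flag in regime~(a) is not real: simply regroup the denominator as $(\sigma_{i,Y}-\alpha\sigma_{i,0})+C_\varepsilon\sigma_{i,0}$ and use $\sigma_{i,Y}-\alpha\sigma_{i,0}\ge 0$ (which you already established) to drop the first term, giving exactly $\sigma_{1,Y}^2/(C_\varepsilon\min_{j}\sigma_{j,0})=\varepsilon$ on the nose---this is precisely what the paper does.
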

\begin{proof}
    We have assumed small $\alpha$ such that $\frac{-\log\alpha+\log c_\varepsilon}{-\log\alpha+\log C_\varepsilon}> \max_{1\leq k\leq K-1}\frac{\sigma_{k+1}}{\sigma_k}$, which implies that for every $1\leq k\leq K$, $\mathcal{I}_{k}\neq \emptyset$. For a fixed $k$, choose any $t$ within this interval. Since $\Phi^\top W(t)\Phi$ and $\Phi^\top\hat{Y}_k\Phi$ are diagonal, write \eqref{eq_inc_learning_spec} as 
    \be
    |\sigma_{l,W}(t)-\sigma_{l,Y}|\!\leq\! \varepsilon,\forall l\leq k, \text{and } |\sigma_{l,W}(t)|\!\leq\! \varepsilon,\forall l>k.\label{eq_spec_goal}
    \ee
    Given the solution in Corollary \ref{col_spec_sol}, 
    we have, $\forall 1\leq l\leq k$,
    \begin{align}
        \sigma_{l,Y}\overset{(*)}{>} \sigma_{l,W}(t)&\overset{(*)}{\geq} \sigma_{l,W}\left(\frac{1}{2\sigma_{l,Y}}\log\frac{C_\varepsilon}{\alpha}\right)\nonumber\\
        &\overset{\eqref{eq_sv_traj_si_eval_b}}{=}\sigma_{l,Y}-\left(\sigma_{l,Y}-\frac{C_\varepsilon\sigma_{l,Y}\sigma_{l,0}}{\sigma_{l,Y}+(C_\varepsilon-\alpha)\sigma_{l,0}}\right)\nonumber\\
        & = \sigma_{l,Y}-\frac{\sigma_{l,Y}^2}{(\sigma_{l,Y}-\alpha\sigma_{l,0}) +C_\varepsilon\sigma_{l,0}}\nonumber\\
        &\overset{(\star)}{\geq} \sigma_{l,Y}-\frac{\sigma_{1,Y}^2}{C_\varepsilon\sigma_{l,0}}\geq \sigma_{l,Y}-\varepsilon\,,\label{eq_spec_bd1}
    \end{align}
    and $\forall k<l\leq K$,
    \begin{align}
        0\!\overset{(*)}{<}\!\sigma_{l,W}(t)&\!\overset{(*)}{\leq} \!\sigma_{l,W}\!\left(\frac{1}{2\sigma_{l,Y}}\log\frac{c_\varepsilon}{\alpha}\right)\nonumber\\
        &\overset{\eqref{eq_sv_traj_si_eval_s}}{=}\frac{c_\varepsilon\sigma_{l,Y}\sigma_{l,0}}{(\sigma_{l,Y}-\alpha\sigma_{l,0})+c_\varepsilon\sigma_{l,0}
        }\!\overset{(\star)}{\leq}\! c_\varepsilon\sigma_{l,0}\!\leq\! \varepsilon\,,\label{eq_spec_bd2}
    \end{align}
    where $(*)$ uses the monotonicity of $\sigma_{i,W}(t)$, and $(\star)$ uses the fact that $\sigma_{l,Y}-\alpha\sigma_{l,0}\geq 0,\forall l\leq K$, derived from $\alpha\leq c_\varepsilon$. Lastly, since we have assumed $\alpha\leq c_\varepsilon$, we have $\forall l>K$,
    \be
    0<\sigma_{l,W}(t)=\frac{\alpha \sigma_{l,0}}{1+2\alpha \sigma_{l,0}t}\leq \alpha \sigma_{l,0}\leq \varepsilon\,.\label{eq_spec_bd3}
    \ee 
    With \eqref{eq_spec_bd1}\eqref{eq_spec_bd2}\eqref{eq_spec_bd3}, the spectral bound in \eqref{eq_spec_goal} is verified.
\end{proof}

As shown in Theorem \ref{thm_spec_inc} and the preceding discussions, from a dynamical system perspective, the incremental learning phenomenon comes from some time-scale separation among dynamics corresponding to learning different components in the target matrix. By decreasing the initialization scale $\alpha$, these time-scale separations become more prominent, allowing one to find low-rank (best rank-$k$) approximations of the target matrix (See Figure \ref{fig:inc}) after one transitional phase concludes ($t=\frac{1}{2\sigma_{k,Y}}\log\frac{C_\varepsilon}{\alpha}$) and before another kicks in ($t=\frac{1}{2\sigma_{k+1,Y}}\log\frac{c_\varepsilon}{\alpha}$). Our Theorem exactly quantifies the scale needed to ensure the existence of such low-rank approximations (up to error $\varepsilon$), and the time intervals within which one can find these approximations.

\subsection{Incremental learning under general initialization}

When one moves to a general initialization $U(0)=\alpha^{1/2}\bar{U}_0$ for some $U_0$, the dynamics
\be
    \dot{\tilde{W}}=\tilde{W}\Sigma_Y+\Sigma_Y\tilde{W}-2\tilde{W}^2,\ \tilde{W}(0)=\alpha\tilde{W}_0\,,
\ee
can no longer be decomposed into scalar dynamics, where $\tilde{W}_0=\Phi^\top \bar{U}_0\bar{U}_0^\top \Phi$. Nonetheless, similar discussions in the previous section still work: Due to the small initialization scale, the dynamic evolution of the part in $\tilde{W}$ that learns top-$k$ singular components of $Y$ has a faster time scale than those that learn the rest of the singular values, and the interval $\mathcal{I}_k$ again separates the two dynamics. By carefully partitioning the closed-form solution obtained in \eqref{prop_cl_sol} and analyzing the growth of each block component, we have the following theorem suggesting the same incremental learning occur under general initialization:
\begin{theorem}[Incremental learning under general small initialization]\label{thm_gen_inc}
    Suppose the $K$ non-zero singular values $\sigma_{1,Y},\sigma_{2,Y},\cdots,\sigma_{K,Y}$ of $Y$ are distinct and ordered in decreasing order. Suppose the initialization of $U(0)=\alpha^{1/2}\bar{U}_0$ satisfies that $\tilde{W}_0=\Phi^\top \bar{U}_0\bar{U}_0^\top \Phi$ has an inverse denoted by $V$. Given some error tolerance $0<\varepsilon\leq \min\{\sigma_{K,Y},1\}$, let $c_\varepsilon=\frac{\varepsilon}{16M^2}$, $C_\varepsilon=\frac{16\sigma_{1,Y}^2M^2}{\varepsilon}$, where 
    % $M:=\max_{k1}\{\}$
    $M:=\max\{\|V\|,\|V^{-1}\|\}$.
    If the initialization scale $\alpha$ is sufficiently small so that  $\frac{-\log\alpha+\log c_\varepsilon}{-\log \alpha+\log C_\varepsilon}>\max_{1\leq k\leq K-1}\frac{\sigma_{k+1}}{\sigma_k}$ and $\alpha\leq \frac{c_\varepsilon}{M}$, then the time intervals $\mathcal{I}_k$ defined in \eqref{eq_time_int} are all non-empty, and the $W(t)$ in Proposition \ref{prop_cl_sol} satisfies that $\forall 1\leq k\leq K$,
    \begin{equation}
        \|W(t)-\hat{Y}_k\|\leq \varepsilon,\quad \forall t\in \mathcal{I}_{k}, \label{eq_inc_learning_gen}
    \end{equation}
    where $\hat{Y}_k:=\arg\min_{\mathrm{rank}(Z)=k}\|Y-Z\|_F$ is the best rank-$k$ approximation of $Y$.
\end{theorem}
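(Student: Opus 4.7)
The plan is to extract a cleaner closed form for $\tilde{W}(t) := \Phi^\top W(t)\Phi$ from Proposition~\ref{prop_cl_sol} and then analyze it block-by-block on $\mathcal{I}_k$, showing that the top-$k$ block approaches $\Sigma_1:=\diag\{\sigma_{i,Y}\}_{i=1}^k$ while the remaining three blocks have norm $O(\varepsilon)$, so that $W(t)\to \hat{Y}_k$ within tolerance $\varepsilon$. First I would exploit the invertibility of $\tilde{W}_0$: with $V:=\tilde{W}_0^{-1}$, the push-through identity $\tilde{W}_0(I_n+\alpha G\tilde{W}_0)^{-1}=(\alpha^{-1}V+G)^{-1}$ (after absorbing $\alpha$ from $W_0=\alpha\bar{U}_0\bar{U}_0^\top$) simplifies \eqref{eq_riccati_sol} to
\[
    \tilde{W}(t) = S(t)\bigl(\alpha^{-1}V + G(t)\bigr)^{-1}S(t)^\top,
\]
placing all time dependence inside a single PSD resolvent $A(t)^{-1}$ with $A(t):=\alpha^{-1}V+G(t)$.

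Next, I would partition conformally into a top-$k$ ``signal'' block and its $(n-k)$-dimensional complement: $A_{11}=\alpha^{-1}V_{11}+G_1(t)$ with $G_1(t)=\Sigma_1^{-1}(e^{2\Sigma_1 t}-I_k)$; $A_{22}=\alpha^{-1}V_{22}+G_{\bar 1}(t)$ with $G_{\bar 1}$ the remaining diagonal blocks of $G$; $S=\diag(S_1,S_{\bar 1})$ with $S_1(t)=e^{\Sigma_1 t}$. The argument then rests on four size estimates, all immediate from the endpoints of $\mathcal{I}_k$ and from $\|V\|,\|V^{-1}\|\leq M$: (a) $\|S_1^{-1}\|^2\leq \alpha/C_\varepsilon$ and $\|S_{\bar 1}\|^2\leq c_\varepsilon/\alpha$; (b) $V\succeq M^{-1}I$ and hence every principal submatrix inherits $V_{ii}\succeq M^{-1}I$, so $A\succeq (\alpha M)^{-1}I$ and $\|A^{-1}\|\leq\alpha M$; (c) $\sigma_{\min}(G_1)\geq (C_\varepsilon/\alpha-1)/\sigma_{1,Y}$, so $G_1$ dominates $A_{11}$ with $\|G_1^{-1}\|=O(\alpha/C_\varepsilon)$; and (d) the Schur-complement identity for $V$, $V_{12}V_{22}^{-1}V_{21}=V_{11}-(\tilde W_0)_{11}^{-1}\preceq V_{11}$, upgrades the Schur correction to $\Delta:=A_{12}A_{22}^{-1}A_{21}\preceq\alpha^{-1}V_{11}$, giving $\|\Delta\|\leq M/\alpha$ rather than the naive $M^3/\alpha$.

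Then I would bound the four blocks of $\tilde{W}(t)-\diag(\Sigma_1,0_{n-k})$ separately. The \textbf{bottom-right} is immediate via (a)(b): $\|S_{\bar 1}(A^{-1})_{22}S_{\bar 1}^\top\|\leq \|S_{\bar 1}\|^2\|A^{-1}\|\leq c_\varepsilon M=\varepsilon/(16M)$. The \textbf{off-diagonal} uses $(A^{-1})_{12}=-(A_{11}-\Delta)^{-1}A_{12}A_{22}^{-1}$, with the critical observation that $S_1G_1^{-1}$ is diagonal with entries $\sigma_i/(e^{\sigma_i t}-e^{-\sigma_i t})\leq 2\sigma_{k,Y}\sqrt{\alpha/C_\varepsilon}$ on $\mathcal{I}_k$; replacing $(A_{11}-\Delta)^{-1}$ by $G_1^{-1}$ via a Neumann expansion (valid since (c)(d) give $\|G_1^{-1}(\alpha^{-1}V_{11}-\Delta)\|\ll 1$) yields $\|S_1(A^{-1})_{12}S_{\bar 1}^\top\|\leq 2\sigma_{k,Y}M^2\sqrt{c_\varepsilon/C_\varepsilon}+\text{(smaller)}\leq \varepsilon/8$. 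For the \textbf{top-left}, I would write $S_1(A^{-1})_{11}S_1^\top=\bigl(S_1^{-1}(A_{11}-\Delta)S_1^{-\top}\bigr)^{-1}$ and use $S_1^{-1}G_1S_1^{-\top}=(I_k-e^{-2\Sigma_1 t})\Sigma_1^{-1}$ to recast this as a Neumann inversion around $\Sigma_1^{-1}$; the perturbations $\alpha^{-1}S_1^{-1}V_{11}S_1^{-\top}$ and $S_1^{-1}\Delta S_1^{-\top}$ have norm $O(M/C_\varepsilon)$ by (a)(d), and the inversion inflates this by $\|\Sigma_1\|^2=\sigma_{1,Y}^2$, giving $\|S_1(A^{-1})_{11}S_1^\top-\Sigma_1\|=O(\sigma_{1,Y}^2M/C_\varepsilon)=\varepsilon/(8M)$. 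Summing the four blocks via the block-matrix triangle inequality gives $\|\tilde W(t)-\diag(\Sigma_1,0)\|\leq\varepsilon$, which is \eqref{eq_inc_learning_gen} after conjugating back by $\Phi$.

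The main obstacle is the off-diagonal and top-left estimates: a naive product $\|S_1\|\cdot\|(A^{-1})_{12}\|\cdot\|S_{\bar 1}\|$ blows up whenever $\sigma_{1,Y}/\sigma_{k+1,Y}$ is large, so the exponential growth of $S_1$ must be absorbed into the exponential decay of $G_1^{-1}$ by factoring $S_1G_1^{-1}$ (resp.\ $S_1^{-1}G_1S_1^{-\top}$) as a well-conditioned diagonal matrix \emph{before} taking spectral norms. The complementary subtlety is that the naive bound $\|\Delta\|\leq M^3/\alpha$ would inflate the top-left deviation by $M^2$ and force $C_\varepsilon\gtrsim M^4$; the Schur-complement identity (d) for $V$ is what restores $\|\Delta\|\leq M/\alpha$ and keeps $C_\varepsilon=16\sigma_{1,Y}^2M^2/\varepsilon$ as in the theorem. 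Tracking the constants cleanly through the two Neumann expansions is what ultimately matches the stated $c_\varepsilon,C_\varepsilon$ and yields a uniform bound independent of the conditioning of individual subblocks of $\tilde{W}_0$.
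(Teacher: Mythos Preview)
Your proposal is correct and follows essentially the same route as the paper: the same rewriting $\tilde W(t)=S(t)(\alpha^{-1}V+G(t))^{-1}S(t)^\top$ via $V=\tilde W_0^{-1}$, the same top-$k$/complement block partition, the same crucial Schur-complement estimate $V_{12}(V_{22}+\alpha G_{\bar 1})^{-1}V_{21}\preceq V_{11}$ to keep the constant at $M$ rather than $M^3$, and the same perturbation-of-inverse around $\Sigma_1^{-1}$ for the signal block. The only cosmetic differences are organizational: the paper bounds $H_{11}$ first and then reuses $\|H_{11}\|\le 2\sigma_{1,Y}$ to handle $H_{12}$ and $H_{22}$ in a chain (slightly cleaner than your separate Neumann expansion for the off-diagonal), whereas your direct bound $\|(A^{-1})_{22}\|\le\|A^{-1}\|\le\alpha M$ for the bottom-right block is actually simpler than the paper's expression via $H_{12}$.
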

\begin{proof}
    Similar to the proof of Theorem \ref{thm_spec_inc}, our assumption on $\alpha$ ensures every $\mathcal{I}_k$ is non-empty. To prove \eqref{eq_inc_learning_gen}, we show that for every choice of $k$, and for any $t_k\in\mathcal{I}_k$, we have $\|W(t_k)-\hat{Y}_k\|\leq \varepsilon$, the following proof proceeds given such a choice of $k$ and $t_k$.

    \noindent
    \textbf{Rewrite solution in blocks}: From \eqref{eq_riccati_sol} in Proposition \ref{prop_cl_sol},
    \begin{align}
        \tilde{W}(t_k)&\quad\ = \Side(t_k) \alpha\tilde{W}_0\left(I_n+\alpha\Mid (t_k) \tilde{W}_0\right)^{-1}\Side^\top(t_k)\nonumber\\
        &\overset{(\tilde{W}_0^{-1}=V)}{=}\alpha\Side(t_k) \left(V+\alpha\Mid (t_k)\right)^{-1}\Side^\top(t_k),\label{eq_pf_temp}
    \end{align}
    Let $\Sigma_k=\diag\{\sigma_{l,Y}\}_{l=1}^k$, and $\Sigma_k^c=\diag\{\sigma_{l,Y}\}_{l=k+1}^K$ (note that $\Sigma_k^c$ can be an empty matrix), write $V, \Side(t_k),\Mid(t_k)$ with the following block forms:
    \ben
        V\!=\!\bmt V_{11}&V_{12}\\ V_{12}^\top & V_{22}\emt\!, \Side(t_k)\!=\!\bmt \Side_1 &0\\ 0& \!\!\!\Side_2\emt\!, \Mid(t_k)\!=\!\bmt \Mid_1 &0\\ 0& \!\!\!\Mid_2\emt\!,
    \een
    where $\Side_1=e^{\Sigma_kt_k}$, $\Mid_1=\Sigma_k^{-1}(e^{2\Sigma_kt_k}-I_k)$,
    \ben
        \Side_2\!=\!\!\begin{bmatrix}
            e^{\Sigma_k^ct_k}\!\! & 0 \\
            0& \!\!I_{n\!-\!K}
        \end{bmatrix},\ \Mid_2\!=\!\!\begin{bmatrix}
            (\Sigma_k^c)^{-1}\!(e^{2\Sigma_k^ct_k}\!-\!I_{K\!-\!k})\!\!&0\\
            0& \!\!2I_{n\!-\!K}t_k
        \end{bmatrix}\!.
    \een
    Then we have 
    \begin{align}
        \tilde{W}(t_k)&\!=\! \alpha \bmt \Side_1^{-1}(V_{11} +\alpha \Mid_1)\Side_1^{-1} & \!\!\!\Side_1^{-1}V_{12}\Side_2^{-1}\\ \Side_2^{-1}V_{12}^\top \Side_1^{-1} &  \!\!\!\Side_2^{-1}(V_{22}+\alpha \Mid_2)\Side_2^{-1}\emt^{-1} \nonumber\\
        &\!=\! \bmt H_{11} & H_{12} \\ H_{12}^\top & H_{22}\emt \,,
    \end{align}
     where (by the inverse formula for a block matrix)
    \begin{align}
        H_{11}&\!=\!\alpha \Side_1\lp(V_{11} - V_{12}(V_{22}+\alpha \Mid_2)^{-1}V_{12}^\top)+\alpha \Mid_1\rp^{-1}\Side_1,\nonumber\\
        H_{12}&\!=\!- H_{11}\Side_1^{-1}V_{12}(V_{22}+\alpha \Mid_2)^{-1}\Side_2,\nonumber\\
        H_{22}&\!=\!\alpha \Side_2(V_{22}\!+\!\alpha \Mid_2)^{-1}\Side_2\!-\!\Side_2(V_{22}\!+\!\alpha \Mid_2)^{-1}V_{12}^\top\Side_1^{-1}H_{12}.\nonumber
    \end{align}
    Our goal is to show that 
    \be\|W(t_k)-\hat{Y}_k\|=\lV\Phi\bmt H_{11}-\Sigma_k & H_{12}\\ H_{12}^\top &H_{22}\emt\Phi^\top\rV\leq \varepsilon.\ee 
    Thus it suffices to show the following
    \be
        \|H_{11}-\Sigma_k\|\leq \frac{\varepsilon}{4},\ \|H_{12}\|\leq \frac{\varepsilon}{4},\ \|H_{22}\|\leq \frac{\varepsilon}{4}\,.\label{eq_gen_bd}
    \ee
    The rest of the proof is to show \eqref{eq_gen_bd} holds, for which we start by deriving norm bounds on $\Side_1$ and $\Side_2$, and then move to $H_{11}-\Sigma_k$, $H_{12}$ and $H_{22}$.

    \noindent
    \textbf{Norm bounds on $\Side_1$ and $\Side_2$}: From that $t_k\in\mathcal{I}_k$, we have
    \ben
        \Side_1^{-1}\!=\!e^{-\Sigma_kt_k}\!\preceq\! e^{-\Sigma_k\frac{1}{2\sigma_k}\log\frac{C_\varepsilon}{\alpha}}\preceq e^{-\frac{1}{2}\log\frac{C_\varepsilon}{\alpha}}I_k\!=\!\sqrt{\frac{\alpha}{C_\varepsilon}}I_k\,,
    \een
    and similarly since $e^{\Sigma_k^{c}t_k}\preceq e^{\Sigma_k^c\frac{1}{2\sigma_{k+1}}\log\frac{c_\varepsilon}{\alpha}}\preceq e^{\frac{1}{2}\log\frac{c_\varepsilon}{\alpha}}I_k$, we have $
        \Side_2\preceq \bmt \sqrt{\frac{c_\varepsilon}{\alpha}}I_{K-k} &0\\ 0& I_{n-K}\emt$. Together we have
    \be
        \|\Side^{-1}_1\|\leq \sqrt{\frac{\alpha}{C_\varepsilon}},\ \|\Side_2\|\leq \sqrt{\frac{c_\varepsilon}{\alpha}}\,.\label{eq_bd_s}
    \ee

    \noindent
    \textbf{Norm bounds on $H_{11}-\Sigma_k$, $H_{12}$ and $H_{22}$}: The primary focus is the bound on $\|H_{11}-\Sigma_k\|$, for the sake of simplicity, we let 
    $(V_{11} - V_{12}(V_{22}+\alpha \Mid_2)^{-1}V_{12}^\top):=\tilde{V}_1$, then
    \begin{align*}
        H_{11}&=\alpha\Side_1\lp \tilde{V}_1+\alpha \Mid_1\rp^{-1}\Side_1\\
        &=\lp \alpha^{-1}\Side_1^{-1}\tilde{V}_1\Side_1^{-1}+ \Side_1^{-1}\Mid_1\Side_1^{-1}\rp^{-1}\\
        &=\bigg( \underbrace{\alpha^{-1}\Side_1^{-1}\tilde{V}_1\Side_1^{-1}+ \Side_1^{-1}\Sigma_k^{-1}\Side_1^{-1}}_{:=\Delta}+\Sigma_k^{-1}\bigg)^{-1}\,.
    \end{align*}
    Notice that
    \ben
    0 \prec V_{11} - V_{12}V_{22}^{-1}V_{12}^\top\preceq V_{11} - V_{12}(V_{22}+\alpha \Mid_2)^{-1}V_{12}^\top\preceq V_{11}\,,
    \een
    from which we know $\|\tilde{V}_1\|\leq \|V_{11}\|\overset{(\textsf{a})}{\leq} M$. Then by \eqref{eq_bd_s},
    \ben
        \|\Delta\|\!\leq\! \frac{\|\Side_1^{-1}\!\|^2\|\tilde{V}_1\|}{\alpha}\!+\!\|\Side_1^{-1}\|^2\|\Sigma_k^{-1}\|\!\leq\! \frac{M\!+\!\sigma_{K,Y}^{-1}\alpha}{C_\varepsilon}\!\overset{(\textsf{b})}{\leq}\! \frac{2M}{C_\varepsilon}.
    \een
    As long as $\|\Sigma_k\|\|\Delta\|\overset{(\textsf{c})}{\leq}\frac{1}{2}$, we have, from~\cite{Horn:2012:MA:2422911},
    \begin{align*}
        \|H_{11}-\Sigma_k\|&\!=\!\|(\Delta+\Sigma_k^{-1})^{-1}-\Sigma_k\|\nonumber\\
        &\!\leq\! \frac{\|\Sigma_k\|^2\|\Delta\|}{1-\|\Sigma_k\|\|\Delta\|}\!\leq\! 2\sigma_{1,Y}^2\|\Delta\|\!\leq\! \frac{4\sigma_{1,Y}^2M}{C_\varepsilon}\!\leq\! \frac{\varepsilon}{4}\,,     
    \end{align*}
    the desired bound on $\|H_{11}-\Sigma_k\|$. Since we have assumed $\varepsilon\leq \sigma_{1,Y}$, this also suggests $\|H_{11}\|\leq \|\Sigma_k\|+\varepsilon\leq 2\sigma_{1,Y}$.

    For $H_{12},H_{22}$, we use the fact that $\|(V_{22}+\alpha \Mid_2)^{-1}\|\leq \|V_{22}^{-1}\|\overset{(\textsf{a})}{\leq}M$, and $\|V_{12}\|\overset{(\textsf{a})}{\leq} M$, together with \eqref{eq_bd_s}, to obtain
    \begin{align}
        \|H_{12}\|&\leq \|H_{11}\|\|\Side_1^{-1}\|\|V_{12}\|\|V_{22}^{-1}\|\|\Side_2\|\nonumber\\
        &\leq 2\sigma_{1,Y}M^2\sqrt{\frac{c_\varepsilon}{C_\varepsilon}}= \frac{\varepsilon}{8}\leq \frac{\varepsilon}{4}\,,
    \end{align}
    and
    \begin{align}
        \|H_{22}\|&\leq \alpha\|\Side_2\|^2\|V_{22}^{-1}\|+\|H_{12}\|\|\Side_1^{-1}\|\|V_{12}\|\|V_{22}^{-1}\|\|\Side_2\|\nonumber\\
        &\leq c_\varepsilon M +\frac{\varepsilon}{8}\sqrt{\frac{c_\varepsilon}{C_\varepsilon}}M^2\leq \frac{\varepsilon}{4}\,.
    \end{align}
    This finishes the proof. For the sake of readability, we omitted several arguments {\footnotesize$(\textsf{a})(\textsf{b})(\textsf{c})$} used in the last part of the proof, and we explain them here. ${\footnotesize(\textsf{a})}$: $\|V_{11}\|\leq M,\|V_{12}\|\leq M$ because the norm of submatrix is bounded by that of the full matrix, and $\|V_{22}^{-1}\|\leq M$ is due to the interlacing theorem~\cite{Horn:2012:MA:2422911} for principal submatrix. ${\footnotesize(\textsf{b})}$: we have $\sigma_{K,Y}^{-1}\alpha \leq \sigma_{K,Y}^{-1}\varepsilon \leq 1\leq M$. ${\footnotesize(\textsf{c})}$: with $\|\Delta\|\leq \frac{2M}{C_\varepsilon}$, it is easy to verify that $\|\Sigma_k\|\|\Delta\|\leq \sigma_{1,Y}\|\Delta\|\leq \frac{1}{2}$.
\end{proof}
The incremental learning results in Theorem \ref{thm_gen_inc} resemble the one in Theorem \ref{thm_spec_inc} for spectral initialization, except for some differences in the definitions of $c_\varepsilon, C_\varepsilon$. It characterizes the quantitative effect of the initialization scale in determining the time-scale separation among the learning dynamics for each target singular component, which provides guidance on how to utilize these results to design training algorithms that can learn low-rank approximations of the ground truth by early stopping. In addition, the proof of the symmetric factorization case might serve as a basis for broader settings.
\subsection{Remarks on other settings}
\subsubsection{Random initialization} A more natural setting is arguably the random initialization with small variance, which corresponds to having $U(0)=\alpha^{1/2} \bar{U}_0$ for some $\bar{U}_0$ whose entries are i.i.d. samples from a standard Gaussian (or other sub-Gaussian distributions). Applying Theorem \ref{thm_gen_inc} to this random initialization setting requires an extra concentration results on the matrix $\bar{W}_0=\bar{U}_0\bar{U}_0^\top$. If one can show that for $\delta\in(0,1)$, $\exists M_\delta>0$ such that $\prob\lp \max\{\|\bar{W}_0\|,\|\bar{W}_0^{-1}\|\}\leq M_\delta\rp\geq 1-\delta$, then one can define $c_\varepsilon,C_\varepsilon$ by $M_\delta$ and find initialization scale $\alpha$ accordingly, and Theorem \ref{thm_gen_inc} suggests that the incremental learning phenomenon as described by \eqref{eq_inc_learning_gen} happens with probability at least $1-\delta$. 
\subsubsection{Rank-deficient initialization} Our proof assumes an initialization shape $\bar{U}_0$ such that $\tilde{W}_0=\Phi^\top\bar{U}_0\bar{U}_0^\top \Phi$ is invertible. This is satisfied if $r<n$ when $\tilde{W}_0$ is always rank-deficient. This requires significant changes in the proof since now we should analyze $\tilde{W}(t_k) = \Side(t_k) \alpha\tilde{W}_0\big(I_n+\alpha\Mid (t_k) \tilde{W}_0\big)^{-1}\Side^\top(t_k)$ directly in \eqref{eq_pf_temp}. With Woodbury's matrix identity, one rewrite $\tilde{W}(t_k)$ as
\ben
    \alpha\Side(t_k)\tilde{U}\big( I\!-\!\tilde{U}^\top\! \Mid(t_k)\tilde{U} (I\!+\!\alpha\tilde{U}^\top \Mid(t_k)\tilde{U})^{-1}\big)\tilde{U}^\top\! \Side(t_k),
\een
where $\tilde{U}=\Phi^\top\bar{U}_0$. Detailed analysis on $\tilde{U}^\top \Mid(t_k)\tilde{U}$ leads to similar results as Theorem \ref{thm_gen_inc}, which is left to future work.

% \section{Discussions: Extension to General Problem Settings}\label{sec_discussions}

\section{Conclusion}\label{sec_conclusion}
In this paper, we develop a quantitative understanding of the incremental learning phenomenon in GF on symmetric matrix factorization problems through its closed-form solution. From a dynamical system perspective, the incremental learning phenomenon comes from some time-scale separation among dynamics corresponding to learning different components in the target matrix. By decreasing the initialization scale, these time-scale separations become more prominent, allowing one to find low-rank approximations of the target matrix.

Future work includes extending current results to asymmetric matrix factorization. Notably, for asymmetric matrix factorization,~\cite{pmlr-v139-tarmoun21a} shows that if the factors are initialized to satisfy some balancedness conditions, then the learning dynamics can be related to GF on a symmetric factorization problem. Moreover, under a small scale, any initialization is close to one that satisfies such balancedness conditions; therefore, the analysis for general initialization can be viewed as analyzing perturbed dynamics from a nomial dynamics with closed-form solution.
\bibliographystyle{ieeetr}
\bibliography{ref}

\end{document}